\documentclass[letterpaper, 10 pt, conference]{ieeeconf}  

\IEEEoverridecommandlockouts                              

\usepackage{cite}
\usepackage{amsmath,amssymb,amsfonts}
\usepackage{graphicx}
\usepackage{textcomp}
\usepackage{xcolor}
\usepackage{epstopdf}
\usepackage{epsfig} 
\usepackage{times} 
\usepackage{tabularx}
\usepackage{dcolumn}
\usepackage{color} 
\usepackage{algorithm}
\usepackage[algo2e,linesnumbered,vlined,ruled]{algorithm2e} 
\usepackage{balance}
\usepackage{accents}
\usepackage{resizegather}
\usepackage{theorem} 
\usepackage{mathtools}
\usepackage{multirow}
\usepackage{amsmath}

\usepackage{bm}
\usepackage{wrapfig}
\usepackage{graphicx}
\usepackage{subcaption}
\usepackage{caption}
\usepackage{mdframed}
\usepackage{algpseudocode}
\usepackage{hyperref}
\usepackage[ruled]{algorithm2e}
\usepackage{booktabs}

\usepackage{caption}
\newtheorem{definition}{\bf{Definition}}

\newtheorem{problem}{\bf{Problem}}

\newtheorem{theorem}{\bf{Theorem}}

\newtheorem{proposition}[theorem]{\bf{Proposition}}

\newcommand{\LTLEVENTUALLY}{\ensuremath{ F}}
\newcommand{\LTLALWAYS}{\ensuremath{ G }}

\newcommand{\notltl}{\neg}
\newcommand{\andltl}{\wedge}
\newcommand{\orltl}{\vee}

\SetKw{KwAnd}{and}

\title{\LARGE \bf
Specification-Aware Distribution Shaping for Robotics Foundation Models

\author{Sad{\i}k Bera Y\"{u}ksel and Derya Aksaray}
\thanks{S.B. Y\"{u}ksel is a PhD student in the Department of Electrical and Computer Engineering at Northeastern University.}%
\thanks{D. Aksaray is an Assistant Professor in the Department of Electrical and Computer Engineering at Northeastern University.}
}

\begin{document}

\maketitle
\thispagestyle{empty}
\pagestyle{empty}

\begin{abstract}
Robotics foundation models have demonstrated strong capabilities in executing natural language instructions across diverse tasks and environments. However, they remain largely data-driven and lack formal guarantees on safety and satisfaction of time-dependent specifications during deployment. In practice, robots often need to comply with operational constraints involving rich spatio-temporal requirements such as time-bounded goal visits, sequential objectives, and persistent safety conditions. In this work, we propose a specification-aware action distribution optimization framework that enforces a broad class of Signal Temporal Logic (STL) constraints during execution of a pretrained robotics foundation model without modifying its parameters. At each decision step, the method computes a minimally modified action distribution that satisfies a hard STL feasibility constraint by reasoning over the remaining horizon using forward dynamics propagation. We validate the proposed framework in simulation using a state-of-the-art robotics foundation model across multiple environments and complex specifications.
\end{abstract}

\section{INTRODUCTION}
Recent advances in robotics foundation models have significantly expanded the capabilities of embodied agents. These models are pretrained on large-scale multi-modal data to learn unified vision-language-action representations that allow robots to map high-level commands directly to low-level control (e.g., \cite{doi:10.1177/02783649241281508, hu2024generalpurposerobotsfoundationmodels}). As a result, they demonstrate strong performance across navigation and manipulation benchmarks and achieve high task completion rates in diverse settings (e.g., \cite{SPOC, FlaRe, kim2024openvlaopensourcevisionlanguageactionmodel, Poliformer}). For example, a robot can interpret instructions such as “go to the kitchen and pick up a mug” or “find a pen” and autonomously execute the task with high success rates. Furthermore, these models exhibit strong zero-shot generalization, which enables them to operate in previously unseen layouts and handle novel objects and task variations without additional retraining.

Despite these impressive capabilities, robotics foundation models remain largely data-driven and lack formal guarantees on their behavior during deployment. In safety-critical settings, robots must satisfy strict operational constraints, such as avoiding restricted regions and respecting geofencing limits. Some approaches incorporate safety considerations into foundation models through training or inference-time modifications. For example, \cite{zhang2025safevlasafetyalignmentvisionlanguageaction} considers safety constraints that penalize behaviors such as collisions with objects or unsafe interactions during manipulation, while \cite{pmlr-v305-sermanet25a} introduces semantic safety rules that discourage harmful actions, such as manipulating fragile or hazardous objects. However, real-world deployments often require enforcing richer spatio-temporal requirements that govern how robot behaviors evolve over time, including but not limited to sequential task specifications, time-bounded objectives, and persistent safety conditions that must hold throughout execution. Existing foundation models do not provide a formal mechanism to ensure compliance with such constraints at runtime.
\begin{figure}[t!]
    \centering
    \begin{subfigure}[b]{0.499\linewidth}
        \centering
        \includegraphics[width=\linewidth]{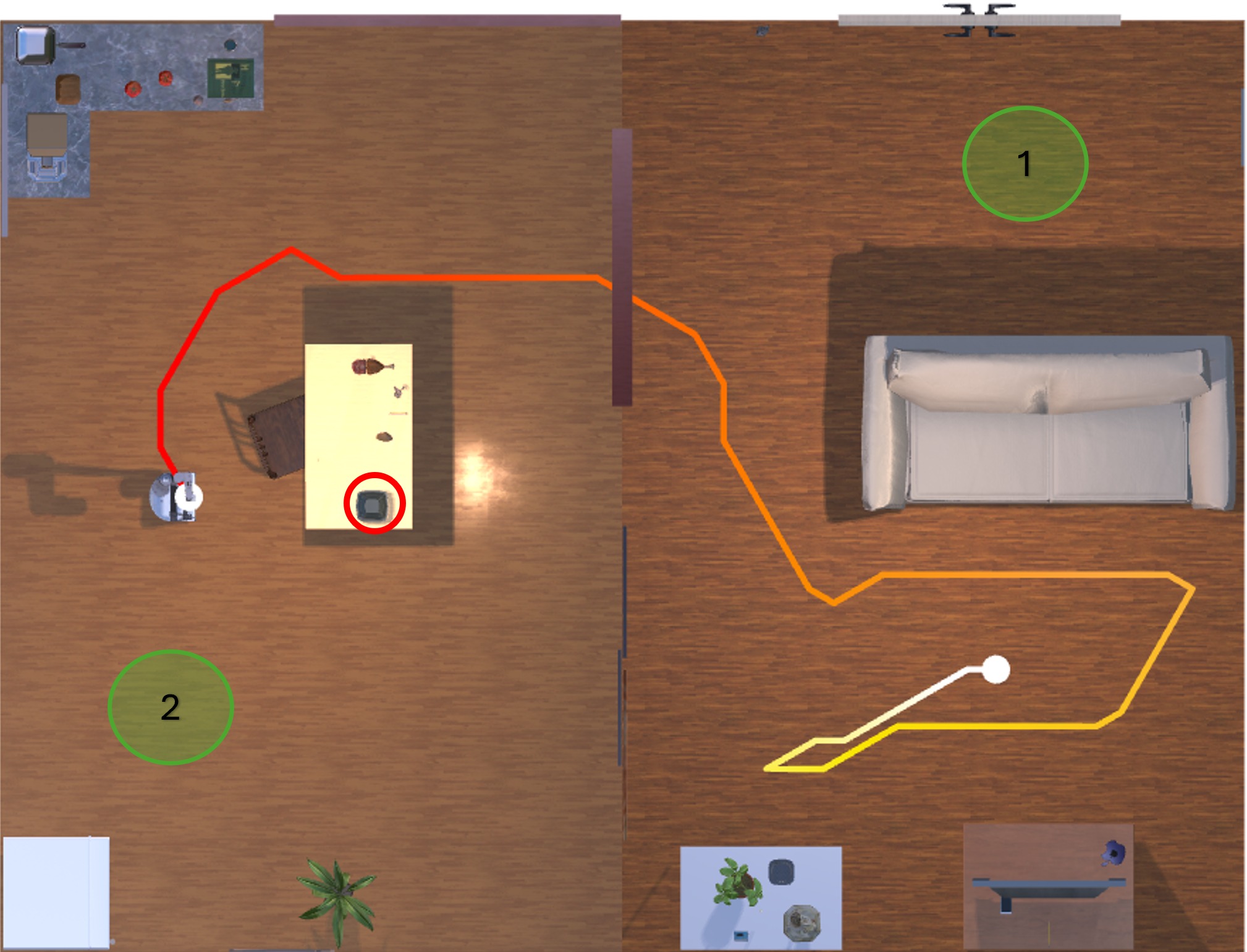}
    \end{subfigure}
    \hspace{-0.2cm}
    \begin{subfigure}[b]{0.499\linewidth}
        \centering
        \includegraphics[width=\linewidth]{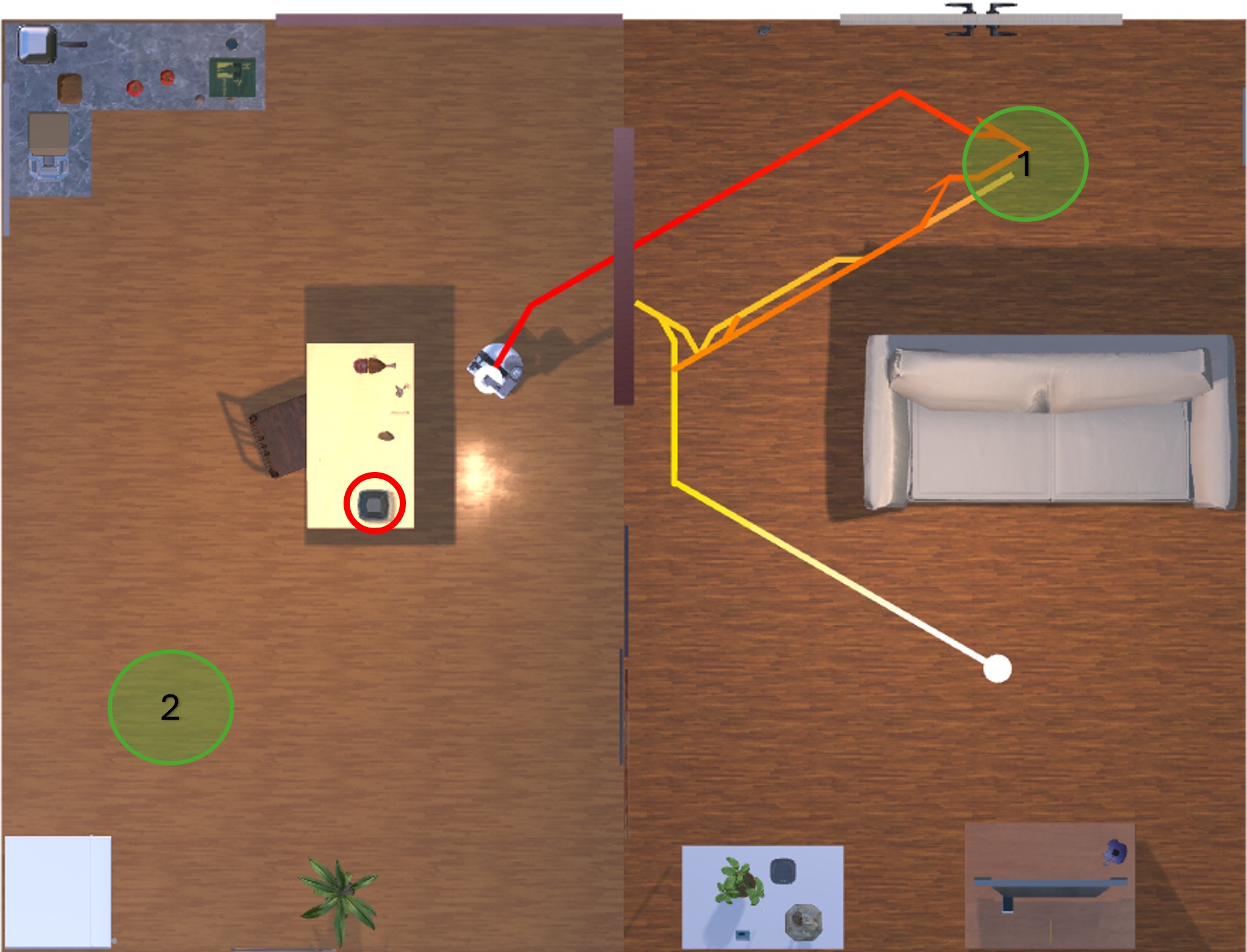}
    \end{subfigure}
    \caption{A robot is instructed to ``find a bowl" while ensuring that at least one charging station (green circles) is visited within designated time intervals. The trajectories illustrate [left] execution under a pretrained robotics foundation model policy (SPOC) and [right] the same model augmented with specification-aware action distribution shaping to enforce the spatio-temporal constraint.}
    \label{fig:Case-1}
\end{figure}

In the literature, a prominent way of expressing complex operational requirements is via temporal logic (TL), which is a formal specification language to describe and reason about complex spatio-temporal behaviors \cite{baier2008principles}. Temporal logics can compactly and rigorously express constraints such as “always avoid unsafe regions,” “eventually reach a goal within a time interval,” “complete tasks in a prescribed sequence”, ``if an event is detected, immediately go to the safe region". In this work, we focus on Signal Temporal Logic (STL) \cite{maler2004monitoring}, which provides quantitative semantics over real-valued signals with explicit state and time parameters. While TL has been successfully applied in robotics for safety-constrained planning and control, its integration with large transformer-based foundation models remains limited.

Several works have explored incorporating TL constraints into foundation models. SELP \cite{SELP} converts natural language instructions into LTL specifications and guides the language model to generate task plans that satisfy these constraints during planning, while \cite{10611447} translates natural language safety constraints into LTL and enforces them at execution time through runtime monitoring and action pruning. However, LTL has Boolean semantics and therefore does not have a notion of robustness degree (i.e., degree of satisfaction). Moreover, it is not expressive to encode explicit time-bounded requirements over real-valued signals. The work most closely related to our paper is the SafeDec framework \cite{kapoor2025constraineddecodingroboticsfoundation}, which enforces a family of STL specifications at inference time without the need for retraining by modifying action logits based on predicted robustness degree values. Nevertheless, this method evaluates candidate actions using next-state robustness and is therefore well-suited to invariance-type constraints (i.e., safety requirements). Time-windowed or sequential specifications require reasoning over longer horizons. In such cases, next-state robustness evaluation becomes insufficient to predict STL satisfaction.

Motivated by the gaps in the literature, we propose a specification-aware action distribution shaping framework that enforces \emph{a rich class of STL tasks} during the execution of a pretrained robotics foundation model, without retraining or modifying its parameters. Our main contributions are summarized as follows:

\begin{itemize}

\item We introduce a novel constrained optimization problem, whose goal is to minimally modify a foundation model’s action distribution while ensuring a desired level of STL satisfaction, and we provide its closed-form solution.  

\item We show that the proposed approach guarantees STL satisfaction for deterministic dynamical systems.

\item We validate the proposed approach in simulation across multiple environments and under a broad class of STL constraints to demonstrate reliable STL satisfaction while preserving strong main-task performance.

\end{itemize}

\section{PRELIMINARIES: Signal Temporal Logic}
We specify the desired behaviors of the system using Signal Temporal Logic (STL).
\begin{definition}[Signal Temporal Logic] Signal Temporal Logic (STL) \cite{maler2004monitoring} is a specification language for expressing temporal properties of real-valued signals. In this paper, we consider an STL fragment with the following syntax:
\begin{flalign}
\phi &:= \varphi \mid \phi_1 \andltl \phi_2 \mid \LTLEVENTUALLY_{[a,b]} \varphi \mid \LTLALWAYS_{[a,b]} \varphi \mid \LTLEVENTUALLY_{[a, c_1]} \LTLALWAYS_{[c_2, b]} \varphi,\notag\\
\varphi &:= \mu \mid \neg \varphi \mid \varphi_1 \land \varphi_2 \mid \varphi_1 \lor \varphi_2 , \label{eq:STL_syntax} 
\end{flalign}
where $0\leq a\leq c_1, c_2\leq b$ are time bounds; $\notltl$, $\andltl$, $\orltl$ denote the negation, conjunction, and disjunction Boolean operators, respectively; $\LTLEVENTUALLY$, $\LTLALWAYS$, denote the finally (eventually) and globally (always) temporal operators, respectively. The symbols $\phi$, $\varphi$ denote STL formulas, while $\mu$ represents a predicate of the form $h(\mathbf{x}) < d$, where $\mathbf{x}: \mathbb{R}_{\geq 0} \rightarrow \mathbb{R}$ is a signal, $h:\mathbb{R}^n \rightarrow \mathbb{R}$ is a real-valued function, and $d \in \mathbb{R}$ is a constant.
\label{def:stl}
\end{definition}

For a signal $\mathbf{x}$, let $x_t$ denote the value of $\mathbf{x}$ at time $t$ and $(\mathbf{x},t)$ denote the part of the signal starting at time $t$. The satisfaction of an STL specification over $(\mathbf{x},t)$ is defined recursively as follows:
\begin{flalign*}
&(\mathbf{x}, t) \models \mu \iff h(x_t) < d, & \\
&(\mathbf{x}, t) \models \neg \mu \iff \neg((\mathbf{x}, t) \models \mu), & \\
&(\mathbf{x}, t) \models \phi_1 \land \phi_2 \iff (\mathbf{x}, t) \models \phi_1 \text{ and } (\mathbf{x}, t) \models \phi_2, & \\
&(\mathbf{x}, t) \models \phi_1 \lor \phi_2 \iff (\mathbf{x}, t) \models \phi_1 \text{ or } (\mathbf{x}, t) \models \phi_2, & \\
&(\mathbf{x}, t) \models \LTLALWAYS_{[a,b]} \phi \iff \forall t' \in [t+a, t+b],\ (\mathbf{x}, t') \models \phi, & \\
&(\mathbf{x}, t) \models \LTLEVENTUALLY_{[a,b]} \phi \iff \exists t' \in [t+a, t+b],\ (\mathbf{x}, t') \models \phi. &
\end{flalign*}

The formula $\LTLEVENTUALLY_{[a,b]} \phi$ requires that $\phi$ should be satisfied at least once within the interval $[t+a, t+b]$, whereas $\LTLALWAYS_{[a,b]}\phi$ enforces that $\phi$ is satisfied throughout the entire interval. The horizon of an STL formula $\phi$, denoted by $hrz(\phi)$, corresponds to the maximum number of future time steps needed to evaluate whether the specification is satisfied \cite{10.1007/978-3-319-11164-3_19}. 

STL provides a quantitative measure called the \emph{robustness degree} which quantifies how strongly a signal $\mathbf{x}$ satisfies a given specification. The robustness degree is calculated recursively as follows:
\begin{flalign*}
&\rho(\mathbf{x}, \mu, t) = d - h(x_t), & \\
&\rho(\mathbf{x},\notltl \phi, t) = -\rho(\mathbf{x}, \phi, t) , & \\
&\rho(\mathbf{x}, \phi_1 \andltl \phi_2, t) = \min(\rho(\mathbf{x}, \phi_1, t), \rho(\mathbf{x},\phi_2, t)), & \\
&\rho(\mathbf{x}, \phi_1 \orltl \phi_2, t) = \max(\rho(\mathbf{x}, \phi_1, t), \rho(\mathbf{x},\phi_2, t)), & \\
&\rho(\mathbf{x}, \LTLALWAYS_{[a,b]}\phi, t) = \min_{t' \in [t+a, t+b]} \rho(\mathbf{x}, \phi, t'), & \\
&\rho(\mathbf{x}, \LTLEVENTUALLY_{[a,b]}\phi, t) = \max_{t' \in [t+a, t+b]} \rho(\mathbf{x}, \phi, t'). &
\end{flalign*}
For notational simplicity, we will use $\rho(\mathbf{x}, \phi)$ to denote $\rho(\mathbf{x}, \phi, 0)$ throughout the remainder of the paper.

\section{PROBLEM STATEMENT}
\label{sec:Problem Statement}
\vspace{0.2em}
\noindent \emph{Robot:} We consider a robot modeled as a discrete-time dynamical system \[x_{t+1} = f(x_t, a_t),\] where $x_t \in \mathcal{X} \subseteq \mathbb{R}^n$ is the system state at time $t$, $a_t \in \mathcal{A}$ is a discrete action selected from a finite action set $\mathcal{A}$ at time $t$, and $f: \mathcal{X} \times \mathcal{A} \rightarrow \mathcal{X}$ is a known dynamics model. Given an initial state $x_0$ and action sequence $\mathbf{a}_{0:T-1}$, the dynamics model produces the corresponding state trajectory $\mathbf{x}_{0:T}$. 
Moreover, the robot is equipped with onboard sensors that provide observations of the environment. At physical state $x_t$, the robot receives a sensory measurement \[y_t = h(x_t) \in \mathcal{Y},\] where $h: \mathcal{X} \rightarrow \mathcal{Y}$ denotes the observation model mapping physical states to sensor outputs. The observation space $\mathcal{Y}$ is a measurable space representing all admissible sensor measurements, whose dimensionality and structure depend on the sensing modality (e.g., RGB images, LiDAR scans). In general, $\mathcal{Y}$ is high-dimensional and provides partial information about the environment and the robot’s configuration.

\vspace{0.2em}
\noindent \emph{Environment:} We assume that our robot operates in a static, complex indoor environment similar to those used in training and evaluation of robotics foundation models such as SPOC \cite{SPOC} and FlaRe \cite{FlaRe}. These environments consist of multiple rooms, furniture, and manipulable objects representative of realistic domestic settings. We assume that a coarse structural layout of the space, including the floor plan and large static obstacles (e.g., walls and fixed furniture), is known, as robots can obtain such maps using standard SLAM-based techniques. However, the precise locations of smaller objects and task-relevant items are unknown. Hence, we characterize these environments as \emph{structurally known but semantically complex and unknown}.

\vspace{0.2em}
\noindent \emph{Mission:} We consider a robot that is tasked with a mission specified in natural language within a semantically complex and a priori unknown environment. Such missions are inherently unstructured and demand reasoning beyond pre-defined tasks and environment models. Robotics foundation models are particularly well-suited for such settings because they can leverage broad contextual knowledge and semantic understanding during decision-making. 

In this work, we consider robotics foundation models that output a probability distribution over a finite discrete action set at each decision step (e.g., \cite{SPOC, FlaRe, Poliformer, kim2024openvlaopensourcevisionlanguageactionmodel}). Let $\mathcal{I}$ denote the natural language instruction given to the foundational model. We define the aggregated input to the foundation model as $z_t := (y_t, \mathcal{I})$ where $y_t$ is the robot's sensory measurement at time $t$. At each time step, the foundational model produces a discrete probability distribution as
\begin{equation}
\pi_{\mathrm{FM}}(\cdot \mid z_t) \in \Delta(\mathcal{A}),
\end{equation}
where $\mathcal{A}$ is the finite action set and 
$\Delta(\mathcal{A}) := \{ p : \mathcal{A} \to [0,1] \mid \sum_{a \in \mathcal{A}} p(a) = 1 \}$ denotes the probability simplex over $\mathcal{A}$. During deployment, an action is sampled from $\pi_{\mathrm{FM}}(\cdot\mid z_t)$ and executed on the physical system.

\color{black}
\vspace{0.2em}
\noindent \emph{Constraint:} 
Since foundation models cannot provide formal guarantees on specification satisfaction, we introduce explicit operational constraints in addition to missions expressed in natural language.
For example, consider a robot instructed to “find the TV remote” that may also need to satisfy an operational requirement, such as reaching a charging station within a specified time due to battery limitations. This additional requirement can be formalized as an STL task of the form “eventually reach a charger within 60 time steps.” Hence, we encode such constraints as an STL formula $\phi$ with the fragment presented in Def.~\ref{def:stl}.

Since we consider a rich class of STL specifications that may include time-windowed reachability and other horizon-dependent constraints, satisfaction cannot be determined from the next state alone. In contrast to the constrained decoding framework in \cite{kapoor2025constraineddecodingroboticsfoundation}, which focuses on invariance-type specifications whose violation can be detected from single-step transitions, our setting requires reasoning over future state evolution. To evaluate the future impact of a candidate action, we introduce a secondary policy 
\begin{equation}
\label{eq:pi_phi}
    \pi_{\phi} : \mathcal{X} \times \{0, \dots, T-1\} \rightarrow \mathcal{A},
\end{equation}
where $T=hrz(\phi)$ is the time horizon of specification $\phi$. This policy maps a state and time index to an action and is designed to satisfy $\phi$ within its time horizon. Although the details of the environment are unknown, the availability of a coarse structural layout enables the computation of $\pi_{\phi}$. In particular, since large obstacles and the floor plan are known and the environment is assumed static, forward propagation of the dynamics under candidate actions can be performed to assess specification satisfaction. We now formally define a specification evaluation function that quantifies whether a candidate action, followed by the policy $\pi_\phi$, leads to satisfaction of the STL specification over the full horizon.

\begin{definition}[Specification Evaluation Function]
Let $\phi$ be an STL specification with horizon $T$, and let $\pi_{\phi}$ denote a policy designed to satisfy $\phi$ over this horizon. For each time step $t \in \{0,\dots,T-1\}$ and action $a_t \in \mathcal{A}$, we define the specification evaluation function
\begin{equation}
    J_{\phi}(\mathbf{x}_{0:t}, a_t, \pi_{\phi}) \in \mathbb{R}
\end{equation}
where $\mathbf{x}_{0:t} = (x_0,\dots,x_t)$ denotes the executed state trajectory up to time $t$. This function quantifies the extent to which executing $a_t$ at time $t$ and subsequently following $\pi_{\phi}$ leads to the satisfaction of $\phi$ over the remaining horizon.
\label{def:sef}
\end{definition}

Using $J_{\phi}$, we enforce STL compliance by constraining the modified action distribution $\pi(\cdot \mid z_t)$ to place sufficient mass on actions that admit a ``good" continuation under $\pi_{\phi}$. In particular, for a threshold $\delta \in \mathbb{R}$, we impose
\[\mathbb{E}_{a_t \sim \pi(\cdot \mid z_t)}
\big[
J_{\phi}\!\big(\mathbf{x}_{0:t},a_t,\pi_{\phi}\big)
\big] \geq \delta,\]

Finally, our objective is to modify the foundational model’s action distribution as little as possible while enforcing the STL constraint. To achieve this in a principled manner, we select the distribution that satisfies the specification constraint while minimally deviating from the original foundation model policy. We quantify this deviation using the Kullback–Leibler (KL) divergence $D_{KL}(\pi_1 || \pi_2 )$, an information-theoretic measure of discrepancy between probability distributions $\pi_1$ and $\pi_2$. This leads to the following constrained optimization problem.

\color{black}
\begin{problem}
\label{problem_formulation}
Given a natural language instruction $\mathcal{I}$, an STL specification $\phi$ with horizon $T$, a policy
$\pi_{\phi}$ designed to satisfy $\phi$, compute a modified policy
$\pi^*(\cdot \mid z_t) \in \Delta(\mathcal{A})$ at each time step $t$ as follows:
\begin{equation}
\pi^*(\cdot \mid z_t)
=
\arg\min_{\pi \in \Delta(\mathcal{A})}
D_{\mathrm{KL}}\!\left(\pi(\cdot \mid z_t)\,\|\, \pi_{\mathrm{FM}}(\cdot \mid z_t)\right)
\end{equation}
subject to
\begin{equation}
\label{optimization_constraint}
\mathbb{E}_{a_t \sim \pi(\cdot \mid z_t)}
\big[
J_{\phi}\!\big(\mathbf{x}_{0:t},a_t,\pi_{\phi}\big)
\big] \geq \delta,
\end{equation}
where $z_t := (h(x_t), \mathcal{I})$ denotes the aggregated input to the foundational model at time $t$, $\pi_{\mathrm{FM}}(\cdot \mid z_t) \in \Delta(\mathcal{A})$ is the action distribution induced by the foundational model, $J_{\phi}$ is the specification evaluation function, and $\delta$ is a user specified threshold.
\end{problem}

\section{PROPOSED SOLUTION}
In this section, we first explain how the STL-satisfying policy $\pi_{\phi}$ is constructed and how the specification evaluation function $J_{\phi}$ in Def.~\ref{def:sef} is defined. We then present the solution to the resulting constrained optimization problem and introduce the main algorithm that enforces the STL specification during task execution.

\subsection{STL Policy Synthesis}
\label{sec:RL}
As discussed in Section \ref{sec:Problem Statement}, evaluating whether a candidate action promotes STL compliance cannot, in general, be determined solely from its immediate next-state effect. For time-bounded specifications, such as reachability within a finite horizon, the consequence of an action depends on the future trajectory induced after that action is taken. Therefore, we require a policy $\pi_{\phi}$ that is explicitly synthesized to satisfy the STL specification over the remaining horizon. 

A variety of approaches have been proposed for synthesizing control policies under STL constraints, including optimization-based methods \cite{7039363, 7447084}, Control Barrier Function (CBF) based methods \cite{Lindemann2019, buyukkocak2022control, BUYUKKOCAK2024104681}, and learning-based methods \cite{aksaray2016q, pmlr-v120-venkataraman20a, DeepRLLagrangian, wang_multi_agent_rl, FunnelRewardRL}. Optimization-based methods often require solving computationally intensive programs whose complexity scales with the specification horizon. CBF-based approaches, while computationally efficient at runtime, typically rely on control-affine dynamics and require careful manual construction of barrier functions tailored to specific classes of STL specifications. In contrast, learning-based methods offer greater flexibility for nonlinear dynamics and complex specifications. In this work, we adopt the funnel-based reward shaping approach in \cite{FunnelRewardRL}, which embeds temporal requirements into a time-varying reward and enables scalable synthesis of an STL-satisfying policy using reinforcement learning.

The funnel-based approach constructs a time-dependent robustness bound that gradually tightens over the specification horizon. Specifically, a decreasing funnel function $\gamma(t)$ defines how the allowable robustness margin shrinks as time progresses. Following \cite{FunnelRewardRL}, $\gamma(t)$ is chosen as an exponentially decaying function
\begin{equation}
    \gamma(t)
    =
    (\gamma_0 - \gamma_\infty)
    e^{- \ell t}
    + \gamma_\infty,
    \label{eq:funnel}
\end{equation}
where $\gamma_0 = \rho_{\max} - \min_{x \in \mathcal{X}} \rho(x, \varphi)$ and $\gamma_\infty \in (0, \min(\gamma_0, \rho_{\max}))$ with $\rho_{\max}$ denoting the maximum achievable robustness of the predicate over the state space. The decay rate $\ell$ and the parameter $t^*$ are selected according to the temporal structure of the STL specification as summarized in Table \ref{tab:parameter_selection}. The parameter $t^*$ determines when the funnel boundary reaches zero within the specified time window.

Given the funnel function constructed according to the temporal component of an STL specification $\phi$ and the robustness measure $\rho(x_t, \varphi)$ associated with the non-temporal formula $\varphi$, the reward function is formulated as 
\begin{equation}
    r(x_t, a_t, t) = \rho(x_t, \varphi) + \gamma(t) - \rho_{\max}
    \label{eq:reward_function}
\end{equation}

As time progresses, the $\gamma(t)$ term gets smaller, and the agent is encouraged to increase the value of the robustness term $\rho(x_t, \varphi)$ to maintain positive reward, which drives the policy toward satisfying the STL specification within the prescribed time window.

\begin{table}[t]
    \centering
    \vspace{1em}
    \caption{Selection of funnel parameters $\ell$ and $t^*$~\cite{FunnelRewardRL}.}
    \label{tab:parameter_selection}
    \renewcommand{\arraystretch}{1.2}
    \begin{tabular}{ccc}
        \toprule
        $\phi$ & $t^*$ & $\ell$ \\
        \midrule

        $\LTLALWAYS_{[a,b]} \, \varphi$ 
        & $t^* = a$ 
        & $\displaystyle 
           \frac{1}{t^*} 
           \ln\!\left( 
               \frac{\gamma_0 - \gamma_\infty}{\rho_{\max} - \gamma_\infty}
           \right)$ \\[0.8em]

        $\LTLEVENTUALLY_{[a,b]} \, \varphi$ 
        & $t^* \in [a,b]$
        & $\displaystyle 
           \frac{1}{t^*} 
           \ln\!\left( 
               \frac{\gamma_0 - \gamma_\infty}{\rho_{\max} - \gamma_\infty}
           \right)$ \\[0.8em]

        $\LTLEVENTUALLY_{[a,c_1]} 
        \LTLALWAYS_{[c_2,b]} \varphi$
        & $t^* \in [a + c_2, c_1 + c_2]$
        & $\displaystyle 
           \frac{1}{t^*} 
           \ln\!\left( 
               \frac{\gamma_0 - \gamma_\infty}{\rho_{\max} - \gamma_\infty}
           \right)$ \\

        \bottomrule
    \end{tabular}
\end{table}

\subsection{Specification Evaluation via $J_{\phi}$}
Once $\pi_{\phi}$ is obtained, it serves as a predictive guide: given a candidate action at time $t$, we evaluate the specification over the trajectory induced by executing that action and subsequently following $\pi_{\phi}$. Intuitively, $\pi_{\phi}$ represents an STL-aware recovery strategy. It allows us to penalize actions that would lead the system into states from which the specification cannot be satisfied, or from which the expected satisfaction level would drop below the desired threshold, even under an optimal STL-satisfying controller. For example, in a time-bounded reachability task such as “eventually reach a charger within 60 time steps,” actions that move the robot away from the charger late in the horizon should be penalized, since recovery may no longer be possible within the remaining time. In this sense, $\pi_{\phi}$ provides the structure needed to shape the foundational model’s action distribution in a temporally consistent manner.

While $J_{\phi}$ can represent different notions of specification satisfaction (e.g., robustness-based scores), in this paper, we adopt a binary feasibility instantiation based on STL robustness, such as:
\begin{equation}
    J_{\phi}(\mathbf{x}_{0:t}, a_t, \pi_{\phi}) = \mathbf{1}[\rho(\tilde{\mathbf{x}}_{0:T}, \phi) > 0]
    \label{eq:sef}
\end{equation}
where $\mathbf{1}[\cdot]$ denotes the indicator function, which equals $1$ if the condition inside the brackets is satisfied and $0$ otherwise. Here, $\rho(\cdot,\phi)$ is the STL robustness function, and $\tilde{\mathbf{x}}_{0:T}$ is the complete trajectory obtained by concatenating the executed prefix $\mathbf{x}_{0:t}$ with the future trajectory $\hat{\mathbf{x}}_{t+1:T} = (x_{t+1},\dots,x_T)$ generated by applying action $a_t$ at time $t$ and subsequently following the policy $\pi_{\phi}$ under the known dynamics $x_{t+1} = f(x_t, a_t)$. Accordingly, $J_{\phi}=1$ if the candidate action admits a satisfying continuation, and $J_{\phi}=0$ otherwise.

\subsection{Action Distribution Optimization Under STL Constraints}
Our objective is to solve Problem \ref{problem_formulation} at each decision step, which produces a modified action distribution that satisfies the prescribed STL specification constraint while remaining as close as possible to the original distribution produced by the foundation model. Minimizing deviation from the original distribution preserves the causal structure learned during pretraining and avoids a mismatch between the model’s internal state evolution and the executed action sequence, which could otherwise degrade performance. This formulation can be interpreted as projecting the foundation model’s action distribution onto the set of specification-compliant distributions. A similar idea is used in SafeDec \cite{kapoor2025constraineddecodingroboticsfoundation}, where actions that violate the STL specifications are suppressed at the logit level. However, that approach evaluates robustness solely based on the next state under each action and is therefore suitable only for invariance-type specifications. In contrast, our approach evaluates task satisfaction over the remaining horizon via the specification-evaluation function $J_{\phi}$ in \eqref{eq:sef}, which enables enforcement of richer STL tasks that require reasoning over time windows and multi-step temporal objectives. 


After computing $\pi_{\phi}$ according to the method in Sec.~\ref{sec:RL} and considering $J_{\phi}$ as in \eqref{eq:sef}, Problem 1 becomes:
\begin{equation}
\label{eq:objective}
\pi^*(\cdot \mid z_t)
=
\arg\min_{\pi \in \Delta(\mathcal{A})}
D_{\mathrm{KL}}\!\left(\pi(\cdot \mid z_t)\,\|\, \pi_{\mathrm{FM}}(\cdot \mid z_t)\right)
\end{equation}
subject to
\begin{equation}
\label{eq:constraint}
\mathbb{E}_{a_t \sim \pi(\cdot \mid z_t)}
\big[
\mathbf{1}[\rho(\tilde{\mathbf{x}}_{0:T}, \phi) > 0]
\big] = 1,
\end{equation}


Here, the Kullback–Leibler divergence is
\begin{equation}
D_{\mathrm{KL}}\!\left(\pi(\cdot \mid z_t)\,\|\,\pi_{\mathrm{FM}}(\cdot \mid z_t)\right)
=
\sum_{a_i \in \mathcal{A}}
\pi(a_i \mid z_t)
\log
\frac{\pi(a_i \mid z_t)}{\pi_{\mathrm{FM}}(a_i \mid z_t)}.
\end{equation}

Note that Problem~\ref{problem_formulation} is a convex optimization problem, since the KL divergence is convex in $\pi$ and the expectation constraint in \eqref{eq:constraint} is linear. 
For notational convenience, define
\begin{equation}
J_i := J_{\phi}(\mathbf{x}_{0:t}, a_i, \pi_{\phi}) \in \{0,1\}.
\end{equation}
Forming the Lagrangian,
\begin{equation}
\begin{aligned}
\mathcal{L}(\pi,\lambda,\mu)
&=
\sum_{a_i \in \mathcal{A}}
\pi(a_i \mid z_t)
\log
\frac{\pi(a_i \mid z_t)}{\pi_{\mathrm{FM}}(a_i \mid z_t)} \\
&\quad
+ \lambda
\Big(
\sum_{a_i \in \mathcal{A}}
\pi(a_i \mid z_t) J_i
- 1
\Big)
+ \mu
\Big(
\sum_{a_i \in \mathcal{A}}
\pi(a_i \mid z_t)
- 1
\Big),
\end{aligned}
\end{equation}
where $\lambda \in \mathbb{R}$ is the multiplier associated with the STL feasibility constraint and $\mu \in \mathbb{R}$ enforces the normalization constraint $\sum_{a_i \in \mathcal{A}} \pi(a_i \mid z_t) = 1$, which arises from the fact that $\pi(\cdot \mid z_t)$ lies in the probability simplex over the action space. 

Taking the partial derivative of the Lagrangian $\mathcal{L}(\pi,\lambda,\mu)$ with respect to $\pi(a_i \mid z_t)$ and setting it equal to zero yields
\begin{equation}
   \log
\frac{\pi(a_i \mid z_t)}{\pi_{\mathrm{FM}}(a_i \mid z_t)}
+ 1
+ \lambda J_i
+ \mu
=
0. 
\end{equation}

Solving for $\pi(a_i \mid z_t)$ gives the exponential-form solution
\begin{equation}
  \pi(a_i \mid z_t)
=
\pi_{\mathrm{FM}}(a_i \mid z_t)
\exp(-\lambda J_i - 1 - \mu).  
\end{equation}

Enforcing normalization leads to
\begin{equation}
\pi^*(a_i \mid z_t; \lambda)
=
\frac{
\pi_{\mathrm{FM}}(a_i \mid z_t)
\exp(-\lambda J_i)
}{
\sum_{a_j \in \mathcal{A}}
\pi_{\mathrm{FM}}(a_j \mid z_t)
\exp(-\lambda J_j)
}.
\label{eq:exp_tilt_hard}
\end{equation}

Since $J_i \in \{0,1\}$, taking the limit $\lambda \to -\infty$ suppresses all actions with $J_i = 0$, and the solution reduces to
\begin{equation}
\pi^*(a_i \mid z_t)
=
\begin{cases}
\dfrac{\pi_{\mathrm{FM}}(a_i \mid z_t)}
{\sum_{a_j \in S_t} \pi_{\mathrm{FM}}(a_j \mid z_t)}
& \text{if } a_i \in S_t, \\
0 & \text{otherwise,}
\end{cases}
\end{equation}
where $S_t := \{ a_i \mid J_i = 1 \}$.                                           

\begin{algorithm2e}[]
\caption{Specification-Aware Action Distribution Optimization}
\label{alg:main}
\SetKwInOut{Input}{Input}

\Input{Foundational model $\mathcal{F}$ queried with natural language instruction $\mathcal{I}$, STL specification $\phi$ with horizon $T$, STL-satisfying policy $\pi_{\phi}$, specification evaluation function $J_{\phi}$, discrete action set $\mathcal{A}$, global time limit $T_{\max}$}

Initialize $t \leftarrow 0$, observe initial state $x_0$, set $\mathbf{x}_{0:0} \leftarrow [x_0]$

\While{\textbf{not} \emph{MainTaskDone}$(x_t,\mathcal{I})$ \textbf{and} $t < T_{\max}$}{
    Query $\mathcal{F}$ to obtain $\pi_{\mathrm{FM}}(\cdot\mid z_t)$
    
    \eIf{\emph{STLDone}$(\mathbf{x}_{0:t},\phi)$}{
        Sample $a_t \sim \pi_{\mathrm{FM}}(\cdot\mid z_t)$
    }{
    \eIf{$t < T$}{
        \ForEach{$a_i \in \mathcal{A}$}{
        Compute $J_{\phi}(\mathbf{x}_{0:t}, a_i, \pi_{\phi})$
        }
        
        \eIf{\emph{Problem}~\ref{problem_formulation} is feasible}{
            Solve \emph{Problem}~\ref{problem_formulation} to obtain $\pi^*(\cdot\mid z_t)$
            
            Sample $a_t \sim \pi^*(\cdot\mid z_t)$
        }{
            Sample $a_t \leftarrow \pi_{\phi}(x_t, t)$
        }
    }{Sample $a_t \sim \pi_{\mathrm{FM}}(\cdot\mid z_t)$}

    }
    
    Execute $a_t$, observe $x_{t+1}$
    
    Append $x_{t+1}$ to $\mathbf{x}_{0:t}$ to obtain $\mathbf{x}_{0:t+1}$

    $x_t \leftarrow x_{t+1}$
    
    $t \leftarrow t+1$
}

\While{\textbf{not} \emph{STLDone}$(\mathbf{x}_{0:t},\phi)$ \textbf{and} $t < T$}{
    Sample $a_t \leftarrow \pi_{\phi}(x_t, t)$
    
    Execute $a_t$, observe $x_{t+1}$
    
    Append $x_{t+1}$ to $\mathbf{x}_{0:t}$ to obtain $\mathbf{x}_{0:t+1}$

    $x_t \leftarrow x_{t+1}$
    
    $t \leftarrow t+1$
}
\end{algorithm2e}

Algorithm~\ref{alg:main} summarizes the overall procedure for specification-aware action selection. Starting from the initial state, the algorithm repeatedly queries the foundational model to obtain the prior action distribution $\pi_{\mathrm{FM}}$ (line 3). The flag \emph{MainTaskDone} becomes true when the foundational model samples the special termination token \texttt{end} from its prior distribution. In practice, this token typically receives the highest probability once the model predicts that the language-conditioned main task has been completed. During the main-task phase, the algorithm first checks whether the STL specification has already been satisfied. If \emph{STLDone} is true, the action is sampled directly from the foundational model’s prior distribution (line 5). Otherwise, the algorithm determines whether STL enforcement remains active by checking whether $t < T$ (line 7). If the STL horizon has not expired, the algorithm evaluates the task functional $J_{\phi}(\mathbf{x}_{0:t}, a_i, \pi{\phi})$ for each candidate action $a_i \in \mathcal{A}$ by forward propagating the known dynamics model (lines 8–9). In a deterministic setting, where the system dynamics are deterministic and $\pi_{\phi}$ is deterministic, each candidate action induces a unique continuation trajectory over the remaining horizon, which allows $J_{\phi}$ to be computed unambiguously. If the constrained optimization problem in Problem~\ref{problem_formulation} is feasible, the modified action distribution $\pi^*(\cdot \mid z_t)$ is computed, and the next action is sampled from this distribution (lines 10-12). Infeasibility may arise when no candidate action admits a satisfying continuation under the assumed dynamics, which can occur due to model mismatch between the forward-propagation model and the true system. In such cases, the algorithm falls back to the STL-satisfying policy $\pi_{\phi}$ (line 14). If instead the STL horizon has expired, the algorithm directly samples from the foundational model’s prior distribution (line 16). Once the main task is completed, the algorithm may still need to complete the STL specification if it has not yet been satisfied. Therefore, if the STL specification remains unsatisfied after the main task loop, the algorithm continues following $\pi_{\phi}$ until the specification is satisfied or the STL horizon $T$ is reached (lines 21-26). We introduce a global time limit $T_{\max} > T$ to guarantee termination in cases where the foundational model becomes stuck or enters a deadlock, thereby preventing indefinite execution.

\begin{proposition}
\label{prop:satisfaction_guarantee}
Assume that (i) the true system dynamics satisfy $x_{t+1}=f(x_t,a_t)$ (i.e., no model mismatch between the forward-propagation model and the physical system), and (ii) the STL specification $\phi$ is satisfiable from the initial state $x_0$. If actions are sampled according to Alg.~\ref{alg:main}, the resulting closed-loop trajectory $\mathbf{x}_{0:T}$ is guaranteed to satisfy $\phi$. 
\end{proposition}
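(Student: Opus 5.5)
The plan is an induction on $t$ with a \emph{feasibility invariant}. For $t\le T$, let $\mathcal{W}_t$ denote the statement: ``either \emph{STLDone}$(\mathbf{x}_{0:t},\phi)$ holds, or the trajectory obtained by concatenating the executed prefix $\mathbf{x}_{0:t}$ with the rollout of $\pi_{\phi}$ from $(x_t,t)$ under $f$ satisfies $\rho(\cdot,\phi)>0$.''

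By assumption (i), the dynamics used for forward propagation coincide with the true dynamics. By the determinism of $f$ and $\pi_{\phi}$, each such continuation is unique, so $J_{\phi}$ in \eqref{eq:sef} is well defined. It also exactly predicts what happens if the action is executed and $\pi_{\phi}$ is followed afterwards.

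\textbf{Base case.} I interpret assumption (ii) in the sense required by the construction of Sec.~\ref{sec:RL}: the synthesized policy $\pi_{\phi}$ produces a satisfying trajectory from $x_0$. Then the action $\pi_{\phi}(x_0,0)$ has $J_{\phi}=1$, so $S_0\neq\emptyset$ and $\mathcal{W}_0$ holds. Making this step rigorous is the main obstacle, because ``$\phi$ satisfiable from $x_0$'' alone does not force the learned $\pi_{\phi}$ to realize a satisfying continuation. I would state explicitly that (ii) is understood as $\pi_{\phi}$ being STL-satisfying from $x_0$, which is the role the paper assigns to $\pi_{\phi}$ in \eqref{eq:pi_phi}.

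\textbf{Inductive step.} Assume $\mathcal{W}_t$ holds with $t<T$, and go through the branches of Alg.~\ref{alg:main}.
\begin{itemize}
\item If \emph{STLDone} holds, I use that for the fragment \eqref{eq:STL_syntax} \emph{STLDone} means the verdict is fixed by the prefix, independent of the future. For example, the eventually-window has been hit, or the always-windows have already elapsed. Hence any action, in particular one sampled from $\pi_{\mathrm{FM}}$, preserves satisfaction. I will state this property of \emph{STLDone} as part of its definition.
\item Otherwise, $\mathcal{W}_t$ gives a continuation via $\pi_{\phi}$ from $(x_t,t)$ that satisfies $\phi$. If $t=0$, the action $\pi_{\phi}(x_0,0)$ is in $S_0$ by the base case. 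If $t\ge1$, consider the action $a=\pi_{\phi}(x_t,t)$. The trajectory generated by executing $a$ and then following $\pi_{\phi}$ is exactly the witnessed continuation, so $J_{\phi}(\mathbf{x}_{0:t},a,\pi_{\phi})=1$. Hence $S_t\neq\emptyset$ and Problem~\ref{problem_formulation} is feasible.
\item By the closed form derived after \eqref{eq:exp_tilt_hard}, $\pi^*$ is supported on $S_t$. Therefore the sampled $a_t$ satisfies $J_{\phi}=1$. By (i), the executed $x_{t+1}=f(x_t,a_t)$ is the predicted one, and the $\pi_{\phi}$-rollout from $(x_{t+1},t+1)$ is precisely the tail of the satisfying trajectory used to compute $J_{\phi}$. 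This yields $\mathcal{W}_{t+1}$.
\item The fallback to $\pi_{\phi}$ (line 14) therefore never fires. Even if it did, executing $\pi_{\phi}(x_t,t)$ would preserve $\mathcal{W}$ by the same argument.
\end{itemize}

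\textbf{Early exit from the main loop.} If the main loop terminates before time $T$ (\emph{MainTaskDone}), the second loop applies $\pi_{\phi}$ until \emph{STLDone} holds or $t=T$. By $\mathcal{W}_t$ and determinism, this reproduces exactly the witnessed satisfying continuation, so the invariant propagates up to $T$. The actions taken for $t\ge T$ are irrelevant, since $\rho(\mathbf{x},\phi)$ depends only on $\mathbf{x}_{0:T}$ by the definition of $hrz(\phi)$.

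\textbf{Conclusion.} At $t=T$, $\mathcal{W}_T$ states either that \emph{STLDone} holds or that the executed prefix $\mathbf{x}_{0:T}$ itself (with an empty continuation) has positive robustness. In both cases $\mathbf{x}_{0:T}\models\phi$.
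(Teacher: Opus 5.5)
Your proposal is correct and follows essentially the same route as the paper: a step-by-step feasibility-preservation argument over the main loop (the paper's Case~1), plus the observation that if the main task ends early, following $\pi_{\phi}$ reproduces an already-witnessed satisfying continuation (Case~2). Your version is more explicit than the paper's on two points the paper uses only implicitly: that $\pi_{\phi}(x_t,t)\in S_t$ is what keeps Problem~\ref{problem_formulation} feasible at every step, and that assumption (ii) must effectively be read as ``$\pi_{\phi}$ itself satisfies $\phi$ from $x_0$.''
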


\begin{proof}
This can be shown by examining two cases:
\noindent\emph{Case 1 (STL completion occurs during the main-task loop):}
For any time step $t < T$ at which the specification remains unsatisfied, the algorithm solves Problem~\ref{problem_formulation} and samples $a_t \sim \pi^*(\cdot \mid z_t)$. Under the hard feasibility constraint and the fact that $J_{\phi}(\mathbf{x}_{0:t},a,\pi_{\phi})\in\{0,1\}$, the optimizer constructs $\pi^*(\cdot\mid z_t)$ so that $\pi^*(a\mid z_t)>0$ only if $J_{\phi}(\mathbf{x}_{0:t},a,\pi_{\phi})=1$. Hence, the sampled action $a_t$ admits a satisfying continuation under $\pi_{\phi}$ from the current history. Under the assumption of no model mismatch, the continuation assessed by $J_{\phi}$ is consistent with the true system evolution. Repeating this reasoning at each $t<T$ preserves feasibility step by step until the STL task becomes satisfied within the horizon, implying $\rho(\mathbf{x}_{0:T},\phi)\ge 0$ with probability one.

\vspace{0.5em}
\noindent\emph{Case 2 (STL completion occurs after the main-task loop):}
Suppose the main task terminates at some $t'<T$ before the STL specification is satisfied. By construction, the executed prefix $\mathbf{x}_{0:t'}$ is generated by Alg.~\ref{alg:main} and, up to time $t'-1$, actions are sampled from $\pi^*(\cdot\mid z_t)$ whenever STL enforcement is active. Therefore, the same feasibility-preservation argument as in Case~1 implies that the state-history pair at time $t'$ still admits a satisfying continuation under $\pi_{\phi}$ within the remaining horizon. Algorithm~\ref{alg:main} then switches to sampling from $\pi_{\phi}$ for the remaining steps, and since $\pi_{\phi}$ is designed to satisfy $\phi$ over the horizon and a valid satisfying continuation exists from $\mathbf{x}_{0:t'}$, following $\pi_{\phi}$ completes the specification within the time bound. Consequently, the resulting trajectory satisfies $\phi$ with probability one.
\end{proof}

\section{SIMULATION RESULTS}
\subsection{Implementation Details}
We evaluate the proposed specification-aware action distribution optimization framework using the Shortest Path Oracle (SPOC) robotics foundation model~\cite{SPOC} within the AI2-THOR simulation environment~\cite{kolve2022ai2thorinteractive3denvironment}, across multiple household layouts and object configurations. AI2-THOR is a high-fidelity 3D simulator providing photo-realistic indoor environments for embodied AI research, including diverse home layouts and assets from the Objaverse dataset~\cite{deitke2022objaverseuniverseannotated3d}. SPOC is a transformer-based model trained via supervised imitation learning on shortest-path expert trajectories and outputs a categorical action distribution conditioned on RGB camera observations and language instructions. Although SPOC supports both navigation and manipulation tasks, we focus exclusively on navigation scenarios.

To obtain the STL-satisfying policy $\pi_{\phi}$, we employ the funnel-based reward shaping approach described in Section~\ref{sec:RL} to train a Deep Q-Network (DQN)~\cite{mnih2013playingatarideepreinforcement}. Since we assume access to a coarse structural layout of the environment, the policy was trained in a two-dimensional occupancy map abstraction of the AI2-THOR environment, shown in Fig.~\ref{fig:simulation_envs}. The occupancy maps were constructed from the simulator by extracting room dimensions and the bounding boxes of large static objects (e.g., walls and fixed furniture). The robot is modeled as a circular footprint, and its motion follows unicycle dynamics consistent with those used in AI2-THOR. This 2D representation is used both during training and for forward propagation of the dynamics when evaluating candidate actions at deployment time, which allows fast and computationally efficient rollouts that are critical for minimizing overhead during real-time execution.

Let $Q_{\phi}(x_t, a, t)$ denote the state–action value function learned by the DQN after the training. The resulting STL-satisfying policy is the deterministic greedy policy
\begin{equation}
\pi_{\phi}(x_t,t)
=
\arg\max_{a \in \mathcal{A}} Q_{\phi}(x_t,a,t). \notag
\label{eq:pi_phi_greedy}
\end{equation}

The synthesized policy $\pi_{\phi}$ is then used within the proposed framework to evaluate candidate actions during execution. We now demonstrate the performance of the overall approach through two case studies.

\begin{figure}[]
    \centering
    \begin{subfigure}[b]{0.499\linewidth}
        \centering
        \includegraphics[width=\linewidth]{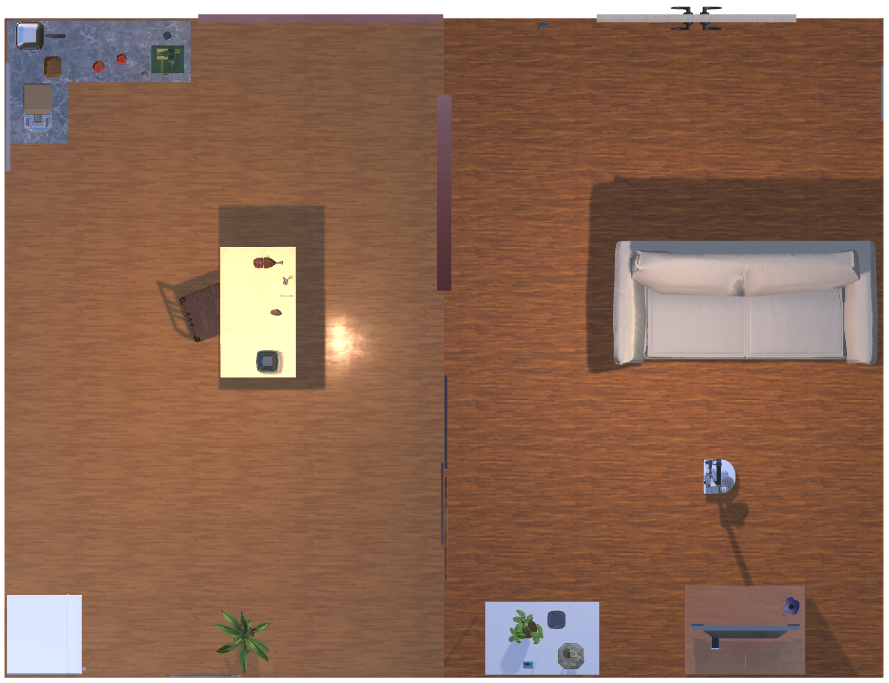}
    \end{subfigure}
    \hspace{-0.2cm}
    \begin{subfigure}[b]{0.499\linewidth}
        \centering
        \includegraphics[width=\linewidth]{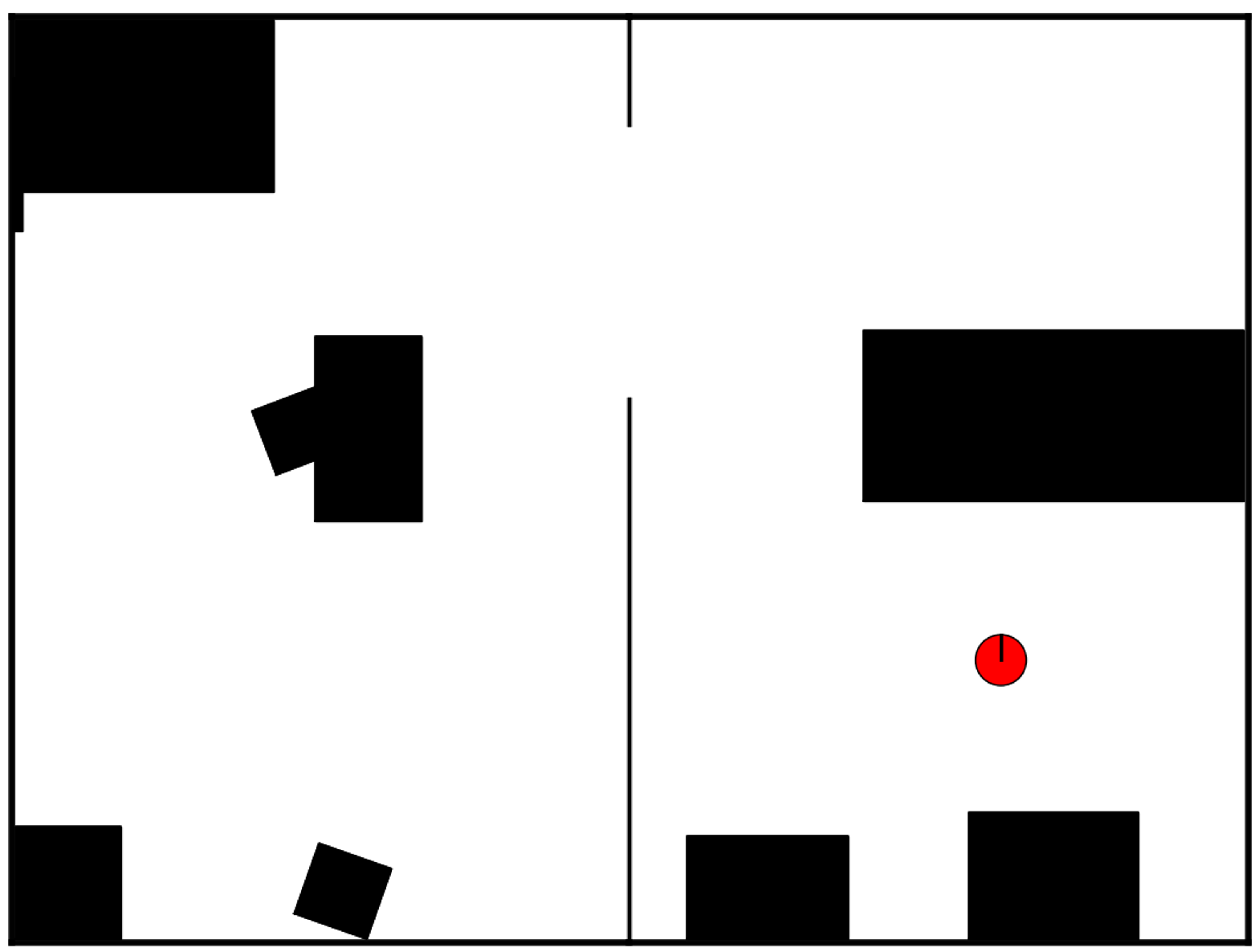}
    \end{subfigure}
    \caption{Top-down view of an AI2-THOR house environment (left) and the corresponding 2D occupancy map abstraction used for training and forward propagation (right). In the occupancy map, black regions denote occupied space (walls and static obstacles), white regions represent free space, and the robot is depicted as a red circular footprint.}
    \label{fig:simulation_envs}
\end{figure}

\subsection{Case 1: Time-Windowed Charger Reachability}
In this case study, we consider a scenario in which the robot must periodically visit charging stations due to battery limitations while executing its main task specified to the foundational model via the natural language instruction \emph{“find a bowl”}. The environment layout is shown in Fig.~\ref{fig:Case-1}.

The additional operational requirement is encoded as the following STL specification:
\begin{equation}
    \phi_1 = \LTLEVENTUALLY_{[0,60]}(Charger_1 \orltl Charger_2) \andltl \LTLEVENTUALLY_{[80,140]}(Charger_1 \orltl Charger_2). \notag
\end{equation}
This specification requires the robot to visit either charging station within the first 60 time steps and again during the interval $[80,140]$. Figure~\ref{fig:Case-1} compares the trajectories obtained under (i) the unmodified foundational model policy and (ii) the proposed specification-aware action distribution optimization. Under the unmodified foundational model policy, the robot goes to the target object without visiting any charging region, thereby violating the STL constraint. In contrast, the proposed framework enforces the temporal reachability requirement during task execution while still progressing toward the main objective.

To assess statistical reliability, we conduct 200 independent simulation runs. The quantitative results are summarized in Table~\ref{tab:results}. The proposed framework achieves a 100\% STL satisfaction rate across all runs, consistent with the satisfaction guarantee stated in Prop.\ref{prop:satisfaction_guarantee} under the hard feasibility constraint. Meanwhile, the main task success rate remains high at 92.5\%, compared to 93.5\% when following the unmodified foundational model policy. In the few failure cases, the robot satisfies the STL specification but fails to reach the target object within the allotted time limit, either because the object cannot be located in time or due to false positives where another object is mistaken for the target.

\begin{table}[htb!]
\centering
\caption{Empirical success rates over 200 independent simulation runs.}
\label{tab:results}
\resizebox{\columnwidth}{!}{
\begin{tabular}{lcccc}
\toprule
& \multicolumn{2}{c}{\textbf{Unmodified SPOC}} 
& \multicolumn{2}{c}{\textbf{Proposed}} \\
\cmidrule(lr){2-3} \cmidrule(lr){4-5}
\textbf{Case} 
& STL (\%) 
& Main (\%) 
& STL (\%) 
& Main (\%) \\
\midrule
Case 1 ($\phi_1$)  & 0  & 93.5  & 100 & 92.5 \\
Case 2 ($\phi_2$) & 0 & 99  & 100 & 82.5 \\
\bottomrule
\end{tabular}
}
\end{table}
\vspace{-1mm}
\subsection{Case 2: Sequential Goal Visits with Safety Constraint}
In this case study, we consider a more complex task involving sequential region visits under a strict safety constraint. The SPOC model is queried with the object-finding instruction \emph{“find a pan”}, and the corresponding environment layout is shown in Fig.~\ref{fig:Case-2}. The sequential and safety requirements are encoded using the following STL formula:
\begin{equation}
\begin{aligned}
    \phi_2 = &\LTLEVENTUALLY_{[0,50]}(Region_1) \andltl \LTLEVENTUALLY_{[50,100]}(Region_2) \andltl \LTLEVENTUALLY_{[100,150]}(Region_3) \notag \\&\andltl \LTLALWAYS_{[0,150]} \notltl (Forbidden\ Region) \notag
\end{aligned}
\end{equation}

This specification requires the robot to sequentially visit three designated regions within prescribed time windows while avoiding the forbidden region throughout the entire horizon. 
Figure~\ref{fig:Case-2} compares trajectories generated by the SPOC policy and by the proposed specification-aware framework. Under the unmodified SPOC policy, the robot violates the STL constraint by traversing the forbidden region while navigating toward the target object. In contrast, the proposed framework successfully enforces the temporal ordering and safety requirements: the robot visits all three regions within their respective time intervals, avoids the forbidden region at all times, and subsequently completes the main task.

We further evaluate performance over 200 independent simulation runs as summarized in Table~\ref{tab:results}. The proposed framework achieves a 100\% STL satisfaction rate, which is again consistent with the satisfaction guarantee under the assumption of no model mismatch. However, the main task success rate decreases to 82.5\%, compared to 99\% under the unmodified foundational model policy and 92.5\% in Case 1. This reduction reflects the increased complexity of the specification. Satisfying sequential reachability constraints while maintaining global safety necessitates more substantial modifications to the foundational model’s probability distribution. As a result, the executed behavior may deviate more significantly from the behavior originally induced by the pretrained model. Similar trade-offs between constraint enforcement and task performance have been observed in prior constrained decoding frameworks~\cite{kapoor2025constraineddecodingroboticsfoundation, park2025grammaraligneddecoding}. A video demonstrating both case studies is available at: \url{https://youtu.be/ftQ7b_69EnY}

\begin{figure}[]
    \centering
    \begin{subfigure}[b]{0.495\linewidth}
        \centering
        \includegraphics[width=\linewidth]{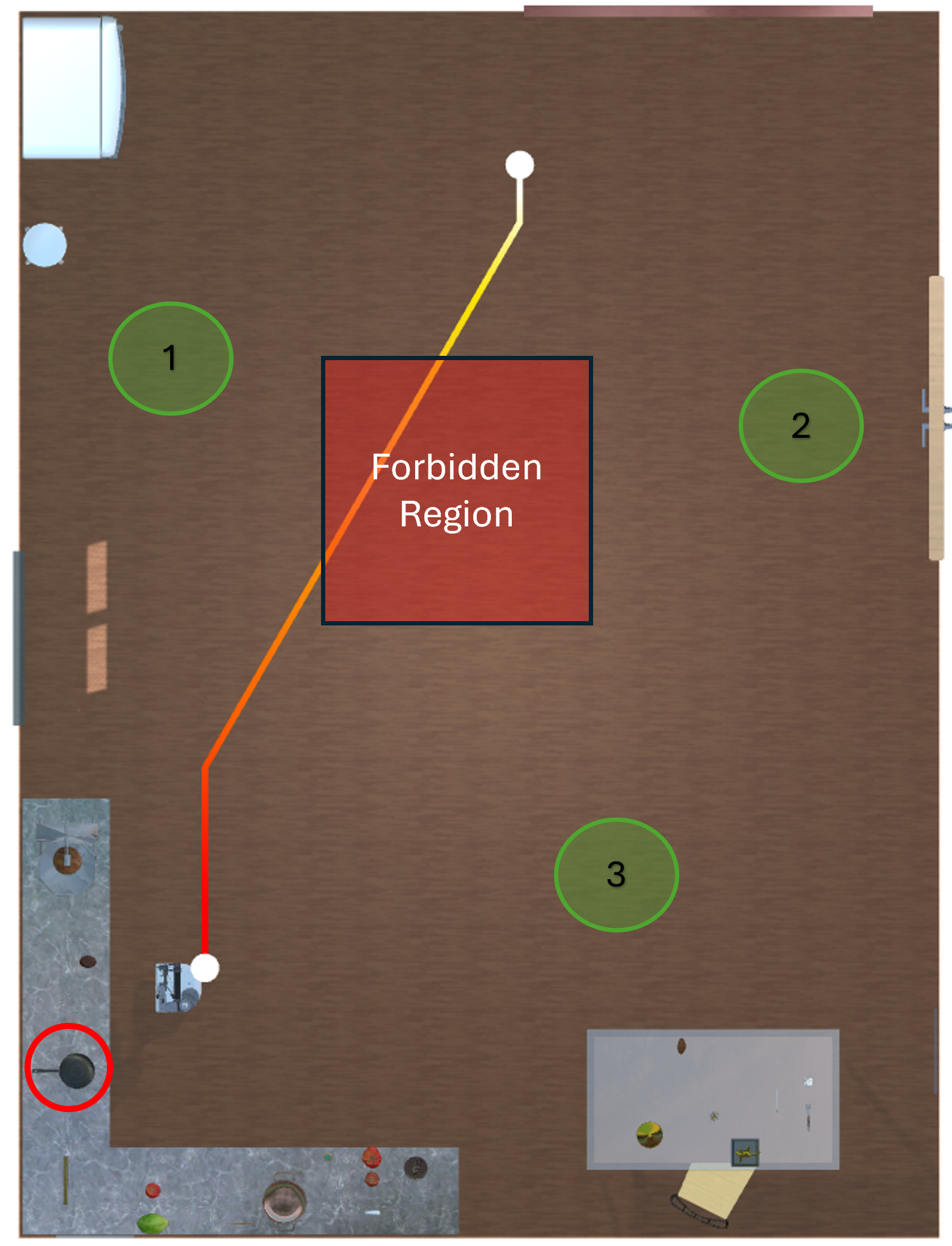}
    \end{subfigure}
    \hspace{-0.2cm}
    \begin{subfigure}[b]{0.495\linewidth}
        \centering
        \includegraphics[width=\linewidth]{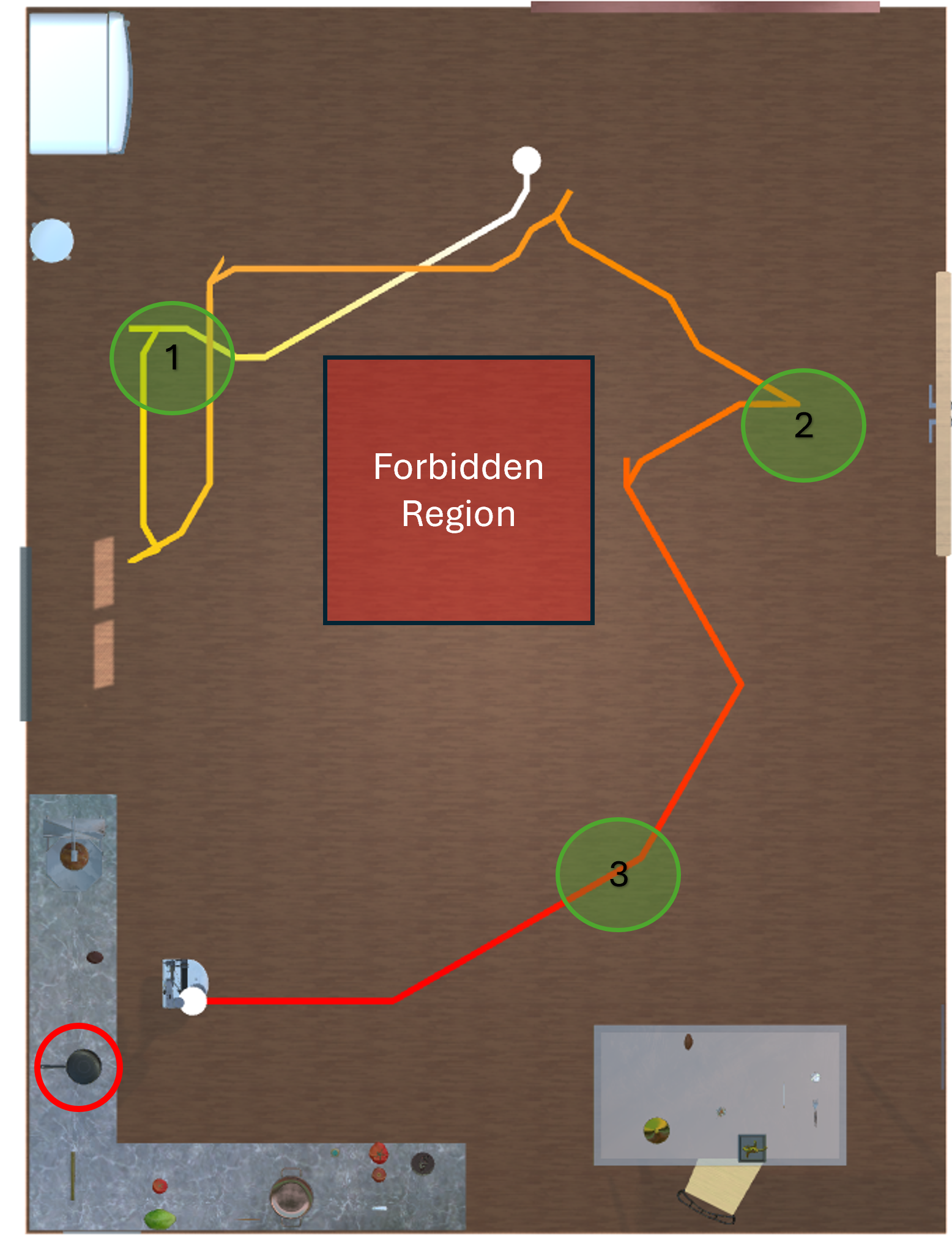}
    \end{subfigure}
    \caption{Trajectory comparison for Case 2: (left) execution under the unmodified SPOC policy and (right) execution under the proposed specification-aware framework.}
    \label{fig:Case-2}
\end{figure}

\section{CONCLUSION}
In this work, we proposed a specification-aware action distribution optimization framework that enforces rich STL constraints during the execution of pretrained robotics foundation models without retraining them. By solving a constrained optimization problem at each decision step, the method modifies the model’s action distribution to guarantee satisfaction of time-windowed, sequential, and safety-critical specifications while remaining as close as possible to the original policy. Under the assumptions of no model mismatch and feasibility of the specification from the initial state, we provided a probabilistic satisfaction guarantee and demonstrated through simulation that the framework consistently enforces STL constraints while maintaining strong main-task performance. Our approach relies on access to a known dynamics model and a coarse structural layout to synthesize the STL-satisfying policy and evaluate candidate actions via forward propagation. Although rollouts are performed in a simplified two-dimensional abstraction, horizon-based evaluation at each step may become computationally demanding for longer tasks or more complex dynamics. Future work will focus on improving scalability, relaxing modeling assumptions, and exploring alternative specification evaluation metrics to allow more flexible trade-offs between main task performance and specification robustness.

\balance
\bibliographystyle{IEEEtran}
\bibliography{references}

@inproceedings{aksaray2016q,
  title={Q-learning for robust satisfaction of signal temporal logic specifications},
  author={Aksaray, Derya and Jones, Austin and Kong, Zhaodan and Schwager, Mac and Belta, Calin},
  booktitle={2016 IEEE 55th Conference on Decision and Control (CDC)},
  pages={6565--6570},
  year={2016},
  organization={IEEE}
}

@inproceedings{buyukkocak2022control,
  title={Control barrier functions with actuation constraints under signal temporal logic specifications},
  author={Buyukkocak, Ali Tevfik and Aksaray, Derya and Yaz{\i}c{\i}o{\u{g}}lu, Yasin},
  booktitle={2022 European Control Conference (ECC)},
  pages={162--168},
  year={2022},
  organization={IEEE}
}

@article{BUYUKKOCAK2024104681,
title = {Sequential control barrier functions for mobile robots with dynamic temporal logic specifications},
journal = {Robotics and Autonomous Systems},
volume = {176},
pages = {104681},
year = {2024},
issn = {0921-8890},
doi = {https://doi.org/10.1016/j.robot.2024.104681},
url = {https://www.sciencedirect.com/science/article/pii/S0921889024000642},
author = {Ali Tevfik Buyukkocak and Derya Aksaray and Yasin Yazıcıoğlu}
}

@book{baier2008principles,
  title={Principles of model checking},
  author={Baier, Christel and Katoen, Joost-Pieter},
  year={2008},
  publisher={MIT press}
}

@inproceedings{maler2004monitoring,
  title={Monitoring temporal properties of continuous signals},
  author={Maler, Oded and Nickovic, Dejan},
  booktitle={International symposium on formal techniques in real-time and fault-tolerant systems},
  pages={152--166},
  year={2004},
  organization={Springer}
}

@ARTICLE{Lindemann2019,

  author={Lindemann, Lars and Dimarogonas, Dimos V.},

  journal={IEEE Control Systems Letters}, 

  title={Control Barrier Functions for Signal Temporal Logic Tasks}, 

  year={2019},

  volume={3},

  number={1},

  pages={96-101},

  doi={10.1109/LCSYS.2018.2853182}}

@misc{kim2024openvlaopensourcevisionlanguageactionmodel,
      title={OpenVLA: An Open-Source Vision-Language-Action Model}, 
      author={Moo Jin Kim and Karl Pertsch and Siddharth Karamcheti and Ted Xiao and Ashwin Balakrishna and Suraj Nair and Rafael Rafailov and Ethan Foster and Grace Lam and Pannag Sanketi and Quan Vuong and Thomas Kollar and Benjamin Burchfiel and Russ Tedrake and Dorsa Sadigh and Sergey Levine and Percy Liang and Chelsea Finn},
      year={2024},
      eprint={2406.09246},
      archivePrefix={arXiv},
      primaryClass={cs.RO},
      url={https://arxiv.org/abs/2406.09246}, 
}

@ARTICLE{FunnelRewardRL,
  author={Saxena, Naman and Gorantla, Sandeep and Jagtap, Pushpak},
  journal={IEEE Robotics and Automation Letters}, 
  title={Funnel-Based Reward Shaping for Signal Temporal Logic Tasks in Reinforcement Learning}, 
  year={2024},
  volume={9},
  number={2},
  pages={1373-1379},
  doi={10.1109/LRA.2023.3341775}}

@ARTICLE{DeepRLLagrangian,
  author={Ikemoto, Junya and Ushio, Toshimitsu},
  journal={IEEE Access}, 
  title={Deep Reinforcement Learning Under Signal Temporal Logic Constraints Using Lagrangian Relaxation}, 
  year={2022},
  volume={10},
  number={},
  pages={114814-114828},
  doi={10.1109/ACCESS.2022.3218216}}

@InProceedings{pmlr-v120-venkataraman20a,
  title = 	 {Tractable Reinforcement Learning of Signal Temporal Logic Objectives},
  author =       {Venkataraman, Harish and Aksaray, Derya and Seiler, Peter},
  booktitle = 	 {Proceedings of the 2nd Conference on Learning for Dynamics and Control},
  pages = 	 {308--317},
  year = 	 {2020},
  editor = 	 {Bayen, Alexandre M. and Jadbabaie, Ali and Pappas, George and Parrilo, Pablo A. and Recht, Benjamin and Tomlin, Claire and Zeilinger, Melanie},
  volume = 	 {120},
  series = 	 {Proceedings of Machine Learning Research},
  month = 	 {10--11 Jun},
  publisher =    {PMLR},
  url = 	 {https://proceedings.mlr.press/v120/venkataraman20a.html}
}

@INPROCEEDINGS{7039363,
  author={Raman, Vasumathi and Donzé, Alexandre and Maasoumy, Mehdi and Murray, Richard M. and Sangiovanni-Vincentelli, Alberto and Seshia, Sanjit A.},
  booktitle={53rd IEEE Conference on Decision and Control}, 
  title={Model predictive control with signal temporal logic specifications}, 
  year={2014},
  volume={},
  number={},
  pages={81-87},
  doi={10.1109/CDC.2014.7039363}}

@INPROCEEDINGS{7447084,
  author={Sadraddini, Sadra and Belta, Calin},
  booktitle={2015 53rd Annual Allerton Conference on Communication, Control, and Computing (Allerton)}, 
  title={Robust temporal logic model predictive control}, 
  year={2015},
  volume={},
  number={},
  pages={772-779},
  doi={10.1109/ALLERTON.2015.7447084}}

@misc{kapoor2025constraineddecodingroboticsfoundation,
      title={Constrained Decoding for Robotics Foundation Models}, 
      author={Parv Kapoor and Akila Ganlath and Michael Clifford and Changliu Liu and Sebastian Scherer and Eunsuk Kang},
      year={2025},
      eprint={2509.01728},
      archivePrefix={arXiv},
      primaryClass={cs.RO},
      url={https://arxiv.org/abs/2509.01728}, 
}

@misc{park2025grammaraligneddecoding,
      title={Grammar-Aligned Decoding}, 
      author={Kanghee Park and Jiayu Wang and Taylor Berg-Kirkpatrick and Nadia Polikarpova and Loris D'Antoni},
      year={2025},
      eprint={2405.21047},
      archivePrefix={arXiv},
      primaryClass={cs.AI},
      url={https://arxiv.org/abs/2405.21047}, 
}

@misc{kolve2022ai2thorinteractive3denvironment,
      title={AI2-THOR: An Interactive 3D Environment for Visual AI}, 
      author={Eric Kolve and Roozbeh Mottaghi and Winson Han and Eli VanderBilt and Luca Weihs and Alvaro Herrasti and Matt Deitke and Kiana Ehsani and Daniel Gordon and Yuke Zhu and Aniruddha Kembhavi and Abhinav Gupta and Ali Farhadi},
      year={2022},
      eprint={1712.05474},
      archivePrefix={arXiv},
      primaryClass={cs.CV},
      url={https://arxiv.org/abs/1712.05474}, 
}

@misc{deitke2022objaverseuniverseannotated3d,
      title={Objaverse: A Universe of Annotated 3D Objects}, 
      author={Matt Deitke and Dustin Schwenk and Jordi Salvador and Luca Weihs and Oscar Michel and Eli VanderBilt and Ludwig Schmidt and Kiana Ehsani and Aniruddha Kembhavi and Ali Farhadi},
      year={2022},
      eprint={2212.08051},
      archivePrefix={arXiv},
      primaryClass={cs.CV},
      url={https://arxiv.org/abs/2212.08051}, 
}

@misc{mnih2013playingatarideepreinforcement,
      title={Playing Atari with Deep Reinforcement Learning}, 
      author={Volodymyr Mnih and Koray Kavukcuoglu and David Silver and Alex Graves and Ioannis Antonoglou and Daan Wierstra and Martin Riedmiller},
      year={2013},
      eprint={1312.5602},
      archivePrefix={arXiv},
      primaryClass={cs.LG},
      url={https://arxiv.org/abs/1312.5602}, 
}

@misc{hu2024generalpurposerobotsfoundationmodels,
      title={Toward General-Purpose Robots via Foundation Models: A Survey and Meta-Analysis}, 
      author={Yafei Hu and Quanting Xie and Vidhi Jain and Jonathan Francis and Jay Patrikar and Nikhil Keetha and Seungchan Kim and Yaqi Xie and Tianyi Zhang and Hao-Shu Fang and Shibo Zhao and Shayegan Omidshafiei and Dong-Ki Kim and Ali-akbar Agha-mohammadi and Katia Sycara and Matthew Johnson-Roberson and Dhruv Batra and Xiaolong Wang and Sebastian Scherer and Chen Wang and Zsolt Kira and Fei Xia and Yonatan Bisk},
      year={2024},
      eprint={2312.08782},
      archivePrefix={arXiv},
      primaryClass={cs.RO},
      url={https://arxiv.org/abs/2312.08782}, 
}

@misc{zhang2025safevlasafetyalignmentvisionlanguageaction,
      title={SafeVLA: Towards Safety Alignment of Vision-Language-Action Model via Constrained Learning}, 
      author={Borong Zhang and Yuhao Zhang and Jiaming Ji and Yingshan Lei and Josef Dai and Yuanpei Chen and Yaodong Yang},
      year={2025},
      eprint={2503.03480},
      archivePrefix={arXiv},
      primaryClass={cs.RO},
      url={https://arxiv.org/abs/2503.03480}, 
}

@article{doi:10.1177/02783649241281508,
author = {Roya Firoozi and Johnathan Tucker and Stephen Tian and Anirudha Majumdar and Jiankai Sun and Weiyu Liu and Yuke Zhu and Shuran Song and Ashish Kapoor and Karol Hausman and Brian Ichter and Danny Driess and Jiajun Wu and Cewu Lu and Mac Schwager},
title ={Foundation models in robotics: Applications, challenges, and the future},

journal = {The International Journal of Robotics Research},
volume = {44},
number = {5},
pages = {701-739},
year = {2025},
doi = {10.1177/02783649241281508},

URL = { 
    
        https://doi.org/10.1177/02783649241281508
    
    

},
eprint = { 
    
        https://doi.org/10.1177/02783649241281508
    
    

}
}

@InProceedings{SPOC,
    author    = {Ehsani, Kiana and Gupta, Tanmay and Hendrix, Rose and Salvador, Jordi and Weihs, Luca and Zeng, Kuo-Hao and Singh, Kunal Pratap and Kim, Yejin and Han, Winson and Herrasti, Alvaro and Krishna, Ranjay and Schwenk, Dustin and VanderBilt, Eli and Kembhavi, Aniruddha},
    title     = {SPOC: Imitating Shortest Paths in Simulation Enables Effective Navigation and Manipulation in the Real World},
    booktitle = {Proceedings of the IEEE/CVF Conference on Computer Vision and Pattern Recognition (CVPR)},
    month     = {June},
    year      = {2024},
    pages     = {16238-16250}
}

@INPROCEEDINGS{FlaRe,
  author={Hu, Jiaheng and Hendrix, Rose and Farhadi, Ali and Kembhavi, Aniruddha and Martín-Martín, Roberto and Stone, Peter and Zeng, Kuo-Hao and Ehsani, Kiana},
  booktitle={2025 IEEE International Conference on Robotics and Automation (ICRA)}, 
  title={FLaRe: Achieving Masterful and Adaptive Robot Policies with Large-Scale Reinforcement Learning Fine-Tuning}, 
  year={2025},
  volume={},
  number={},
  pages={3617-3624},
  doi={10.1109/ICRA55743.2025.11127934}}

@InProceedings{Poliformer,
  title = 	 {PoliFormer: Scaling On-Policy RL with Transformers Results in Masterful Navigators},
  author =       {Zeng, Kuo-Hao and Zhang, Zichen and Ehsani, Kiana and Hendrix, Rose and Salvador, Jordi and Herrasti, Alvaro and Girshick, Ross and Kembhavi, Aniruddha and Weihs, Luca},
  booktitle = 	 {Proceedings of The 8th Conference on Robot Learning},
  pages = 	 {408--432},
  year = 	 {2025},
  editor = 	 {Agrawal, Pulkit and Kroemer, Oliver and Burgard, Wolfram},
  volume = 	 {270},
  series = 	 {Proceedings of Machine Learning Research},
  month = 	 {06--09 Nov},
  publisher =    {PMLR},
  url = 	 {https://proceedings.mlr.press/v270/zeng25a.html}
}

@InProceedings{pmlr-v305-sermanet25a,
  title = 	 {Generating Robot Constitutions \& Benchmarks for Semantic Safety},
  author =       {Sermanet, Pierre and Majumdar, Anirudha and Irpan, Alex and Kalashnikov, Dmitry and Sindhwani, Vikas},
  booktitle = 	 {Proceedings of The 9th Conference on Robot Learning},
  pages = 	 {4767--4823},
  year = 	 {2025},
  editor = 	 {Lim, Joseph and Song, Shuran and Park, Hae-Won},
  volume = 	 {305},
  series = 	 {Proceedings of Machine Learning Research},
  month = 	 {27--30 Sep},
  publisher =    {PMLR},
  url = 	 {https://proceedings.mlr.press/v305/sermanet25a.html}
}

@INPROCEEDINGS{SELP,
  author={Wu, Yi and Xiong, Zikang and Hu, Yiran and Iyengar, Shreyash S. and Jiang, Nan and Bera, Aniket and Tan, Lin and Jagannathan, Suresh},
  booktitle={2025 IEEE International Conference on Robotics and Automation (ICRA)}, 
  title={SELP: Generating Safe and Efficient Task Plans for Robot Agents with Large Language Models}, 
  year={2025},
  volume={},
  number={},
  pages={2599-2605},
  doi={10.1109/ICRA55743.2025.11128420}}

@INPROCEEDINGS{10611447,
  author={Yang, Ziyi and Raman, Shreyas S. and Shah, Ankit and Tellex, Stefanie},
  booktitle={2024 IEEE International Conference on Robotics and Automation (ICRA)}, 
  title={Plug in the Safety Chip: Enforcing Constraints for LLM-driven Robot Agents}, 
  year={2024},
  volume={},
  number={},
  pages={14435-14442},
  doi={10.1109/ICRA57147.2024.10611447}}

@InProceedings{10.1007/978-3-319-11164-3_19,
author="Dokhanchi, Adel
and Hoxha, Bardh
and Fainekos, Georgios",
editor="Bonakdarpour, Borzoo
and Smolka, Scott A.",
title="On-Line Monitoring for Temporal Logic Robustness",
booktitle="Runtime Verification",
year="2014",
publisher="Springer International Publishing",
address="Cham",
pages="231--246",
isbn="978-3-319-11164-3"
}

@INPROCEEDINGS{wang_multi_agent_rl,
  author={Wang, Jiangwei and Yang, Shuo and An, Ziyan and Han, Songyang and Zhang, Zhili and Mangharam, Rahul and Ma, Meiyi and Miao, Fei},
  booktitle={2025 IEEE/RSJ International Conference on Intelligent Robots and Systems (IROS)}, 
  title={Multi-Agent Reinforcement Learning Guided by Signal Temporal Logic Specifications}, 
  year={2025},
  volume={},
  number={},
  pages={6048-6054},
  doi={10.1109/IROS60139.2025.11246629}}

\end{document}